\theoremstyle{thmstyleone}%
\newtheorem{theorem}{Theorem}
\theoremstyle{thmstyletwo}%
\theoremstyle{thmstylethree}%
\begin{document}

\title[Linearly-scalable learning of smooth low-dimensional patterns with permutation-aided entropic dimension reduction]{Linearly-scalable learning of smooth low-dimensional patterns with permutation-aided entropic dimension reduction}


\author*[1]{\fnm{Illia} \sur{Horenko}}\email{horenko@rptu.de}

\author[1,2]{\fnm{Lukas} \sur{Pospisil}}\email{lukas.pospisil@vsb.cz}

\affil*[1]{\orgdiv{Chair for Mathematics of AI, Faculty of Mathematics}, \orgname{RPTU Kaiserslautern-Landau}, \orgaddress{\street{Gottlieb-Daimler-Str. 48}, \city{Kaiserslautern}, \postcode{67663}, \country{Germany}}}

\affil[2]{\orgdiv{Department of Mathematics, Faculty of Civil Engineering}, \orgname{VSB – Technical University Ostrava}, \orgaddress{\street{Ludvika Podeste 1875/17}, \city{Ostrava-Poruba}, \postcode{70800}, \country{Czech Republic}}}


\keywords{entropy, regularization, permutation, economical data, unsupervised learning}
\abstract{In many data science applications, the objective is to extract appropriately-ordered smooth low-dimensional data patterns from high-dimensional data sets. This is challenging since common sorting algorithms are primarily aiming at finding monotonic orderings in low-dimensional data, whereas typical dimension reduction and feature extraction algorithms are not primarily designed for extracting smooth low-dimensional data patterns. We show that when selecting the Euclidean smoothness as a pattern quality criterium, both of these problems  (finding the optimal 'crisp' data permutation and extracting the sparse set of permuted low-dimensional smooth patterns) can be efficiently solved numerically as one unsupervised entropy-regularized iterative optimization problem. We formulate and prove the conditions for monotonicity and convergence of this linearly-scalable (in dimension) numerical procedure, with the iteration cost scaling of $\mathcal{O}(DT^2)$, where $T$ is the size of the data statistics and $D$ is a feature space dimension.  The efficacy of the proposed method is demonstrated through the examination of synthetic examples as well as a real-world application involving the identification of smooth bankruptcy risk minimizing transition patterns from high-dimensional economical data. The results showcase that the statistical properties of the overall time complexity of the method exhibit linear scaling in the dimensionality $D$ within the specified confidence intervals.}

\maketitle

\section{Introduction}
\label{sec:introduction}

The problem of efficiently re-arranging or permuting data in some desired way, for example, deploying sorting algorithms, belongs to the most long-standing questions in computer science and applied mathematics \cite{cormen01}. Many  impressive theoretical and practical algorithmic results in this area could be established in past decades for problems of one- and low-dimensional data sorting,  where one seeks for a monotonic (ascending or descending) ordering in one  or several  data  dimensions \cite{master80,cormen01,sort02}.  In their seminal work, Garrett Birkhoff and John von Neumann have established a mathematical relationship between permutations of the $T$-component vector  and the  multiplication of this  vector with $T\times T$ double-stochastic Markovian operator $P$, containing only one 1.0 in every row and every column, with all other matrix elements being equal to zero \cite{birhoff46,neumann53}. Such a 'crisp'  double-stochastic Markov operator (containing only zeroes and ones) is referred to as a permutation matrix - as opposed to the 'fuzzy' double-stochastic Markov operators that contain elements that are between zero and one.   For a vector with $T$ elements there exist $T!$ of all possible ('crisp') permutation matrices $P$,  that build the edges of the $T(T-2)$-dimensional polytope of all 'fuzzy' (or 'soft') double-stochastic Markov matrices \cite{ziegler_95,markov_book17}. NP-complexity of the original 'crisp' permutation problem - arising when extending the \emph{generic} sorting and permutation problems (satisfying desired criteria) from one to several dimensions - has led to a growing popularity of methods based on 'soft' relaxations of the permutation matrix, ignited by several very successful mathematical and algorithmic approaches that dwell on the spectral decomposition of the 'soft'/'fuzzy' Markov operator \cite{Deuflhard2000,schuette10} and allowing for a metastable decomposition and a reduced analysis of high-dimensional systems from various areas \cite{SchuetteSarich2013,roeblitz13, horenko15}. Further, the ideas of 'soft' Markovian relaxation for permutations were explored in the areas of graph-matching and graph-alignment, leading to new approaches to these problems - like the new spectral criteria for checking the matching of this 'fuzzy' /continuous relaxation to the original 'crisp' graph-matching permutation \cite{bronstein15}. These 'soft' permutation ideas were further applied to the supervised graph-permutation and graph-alignment problems  \cite{mena17,mena20,nikol23}.  In the literature, it is argued that the 'soft' permutations allow reducing NP-hard to P-hard algorithmic solutions, but at the same time is not clear how the loss of 'crispness' for  the resulting 'soft' permutation matrices, can avoid leading to such  'soft' permutation relaxation extremes as the stochastic matrices with all of the elements being equal to $\frac{1}{T}$ - and where instead of searching for the data permutations one finds a data average. 

In the following, we show that the particular problem of finding relevant low-dimensional subsets - together with  finding 'crisp' permutations that lead to smooth patterns in these low-dimensional subsets - can be solved together with a linear (in dimension $D$) complexity scaling, considering the combined problem as the problem of entropy-regularized expected non-smoothness minimization, where the expectation is taken over the a priori unknown feature probabilities $W_1,\dots,W_D$. Learning of these probabilities  $W_1,\dots,W_D$ will be derived from a joint optimization problem formulation - and performed using the analytically-solvable formula, computed together with the learning of the 'crisp' data permutations $P$.   


\section{Problem formulation}
\label{sec:problem}

Let $X\in\mathbb{R}^{T,D}$ be a $D$-dimensional data matrix with $T$ data instances, i.e., with every column  $X_{:,d}$ of this data matrix representing a statistics of $T$ instances in a feature dimension $d$, where $d=1,\dots,D$.  To define the permuted pattern smoothness measure, we first introduce the differencing operator $\Delta\in\mathbb{R}^{T,T}$, with all the elements being zeros, except the main diagonal that is equal to $(-1)$ and the upper diagonal being $1$. Multiplying any $T$-component column vector with this operator $\Delta$ computes differences between the neighboring elements of the vector. Without a loss of generality, in the following applications, we will consider the case of periodic boundary conditions for the data $X_{:,d}$, implying that in any dimension $d$ the first data element $X_{1,d}$ is a direct neighbor of the last data element $X_{T,d}$, meaning that also $\Delta_{T,1}=1$. We will refer to the matrix $\Delta$ that satisfies these conditions as a \emph{$\Delta$ satisfying a periodic boundary condition}.    

It is straightforward to verify that the Euclidean pattern non-smoothness $S_d$ in any data dimension $d$ can then be measured as the following non-negative scalar-valued expression:
\begin{eqnarray}\label{eq:functional_orig}
	S_d&=&\|\Delta P X_{:,d}\|_2^2=  X_{:,d}^\dagger P^\dagger\Delta^\dagger\Delta P X_{:,d},
\end{eqnarray}
where $\dagger$ denotes a transposition operation. Smooth permutations $ P X_{:,d}$ are characterized by low values of $S_d$, and increasing non-smoothness results in a growing value of this measure  (\ref{eq:functional_orig}). 

The central message of this brief report is in showing that the multidimensional extension of this one-dimensional non-smoothness measure (\ref{eq:functional_orig}) can be computationally efficiently formulated as an expected value with respect to the (a priori unknown) $D$-dimensional feature probability distribution $W=\left(W_1,W_2,\dots,W_D\right)$.  And that this unknown distribution $W$, together with the permutation matrix $P$ - can be very efficiently learned by solving the following Shannon entropy-regularized minimization problem: 
\begin{eqnarray}\label{eq:functionalT1}
	 \left\{W^\epsilon,P^\epsilon\right\}&=&\arg\min\limits_{W\in\mathbb{P}^{D},P\in\mathbb{P}^{T\times T}} F^\epsilon_{W}\left[S^\epsilon\right] (W,P), \nonumber\\
	  F^\epsilon_{W}\left[S^\epsilon\right](W,P)&=&\mathbb{E}_{W^\epsilon}\left[S^\epsilon\right](W,P)+ \epsilon \sum_{d=1}^D W_d\log W_d= \nonumber\\
	    &=&\underbrace{ \sum_{d=1}^{D} W_d X_{:,d}^\dagger P^\dagger\Delta^\dagger\Delta P X_{:,d} }_{\textrm{expected Euclidean non-smoothness}} +\underbrace{ \epsilon\sum_{d=1}^D W_d\log W_d}_{\textrm{Shannon entropy regularization}},\\
	  \label{eq:functionalT2}
	 \textrm{such that } ~~ W&\in&\mathbb{P}^{(D)},P\in\mathbb{P}^{(T\times T)},\nonumber\\ 
	\mathbb{P}^{(D)} := \{ W&\in& \mathbb{R}^{D} \bigg | \; W \geq 0\; \wedge \; \sum_{d=1}^D W_{d}=1 \},\nonumber\\ 
\mathbb{P}^{(T\times T)} := \{ P&\in& \mathbb{R}^{(T\times T)} \bigg | \; P_{i,j}=\{0\}/\{1\}\; \wedge \; \sum_{i=1}^T P_{i,j}=1 \wedge \; \sum_{j=1}^T P_{i,j}=1 \}.	\nonumber\\ 
\end{eqnarray}

The following Theorem summarizes the properties of this problem's solutions: 
\begin{theorem}
\label{theorem:theone}

\begin{enumerate}[(1.)]
\item Product property: matrix product of permutation matrices is a permutation matrix;

\item Periodic translations and flipping:  Let $\left\{ W^\epsilon,P^\epsilon \right\}$ be a solution of (\ref{eq:functionalT1}-\ref{eq:functionalT2}) for given data $X$ and let $Q \in \mathbb{R}^{(T \times T)}$ be a permutation matrix. Then
\begin{enumerate}
\item the solution of (\ref{eq:functionalT1}-\ref{eq:functionalT2}) for permuted data $QX$ is given by $\left\{W^\epsilon,P^\epsilon Q^\dagger \right\}$;
\item if $Q$ is a periodic translation matrix or an order-inverting (flipping) matrix, and the finite-differencing operator $\Delta$ satisfies a periodic boundary condition, then a pair $\left\{ W^\epsilon,Q P^\epsilon \right\}$ is also a solution of (\ref{eq:functionalT1}-\ref{eq:functionalT2}) for given data $X$;
\end{enumerate}

\item Convexity of continuous relaxation: the expected non-smoothness functional \eqref{eq:functional} is convex in space of continuous double-stochastic matrices $P$;   

\item Solvability and computational cost wrt. $W$: for any fixed data $X$, permutation matrix  $P$  and regularization parameter $\epsilon>0$, constrained minimization problem (\ref{eq:functionalT1}-\ref{eq:functionalT2}) 
admits a unique closed-form solution $W^{(\epsilon)}=\left(W^{(\epsilon)}_1,\dots,W^{(\epsilon)}_D\right)$ that can be computed with the cost $\mathcal{O}(DT)$:
\begin{eqnarray}\label{eq:sol}
	W^{(\epsilon)}_d &=& \frac{\exp\left( - \epsilon^{-1} X_{:,d}^\dagger P^\dagger\Delta^\dagger\Delta P X_{:,d} \right)}{\sum_{d=1}^D\exp\left( - \epsilon^{-1} X_{:,d}^\dagger P^\dagger\Delta^\dagger\Delta P X_{:,d} \right)}\;. 
\end{eqnarray}
\item Solvability and computational cost wrt. pairwise permutations $\pi_{i,j}\in\mathbb{P}^{(T\times T)}$:  for any fixed data $X$, feature probability distribution $W$, and given permutation matrix  $P$, minimum of the pairwise permutation problem
\begin{eqnarray}\label{eq:pair}
 \left\{i^*,j^*\right\}&=&\arg\min\limits_{i,j}   \sum_{d=1}^{D} W_d X_{:,d}^\dagger (\pi_{i,j}P)^\dagger\Delta^\dagger\Delta(\pi_{i,j}P)X_{:,d}, 
\end{eqnarray}
(where $\pi_{i,j}\in\mathbb{P}^{(T\times T)}$ is a pairwise permutation matrix, that only interchanges elements $i$ and $j$, leaving all other elements unchanged) can be computed with the cost  $\mathcal{O}(DT^2)$. Moreover, $P^{new}=\pi_{i^*,j^*}P$ fulfills the condition that $F^\epsilon_{W}\left[S^\epsilon\right](W,P)\geq F^\epsilon_{W}\left[S^\epsilon\right](W,P^{new})$, where $F^\epsilon_{W}$ is defined in (\ref{eq:functionalT1}) - i.e., solving iteratively  pairwise permutation problems (\ref{eq:pair}) followed with application of (\ref{eq:sol})  leads to a monotonic minimization of the original problem (\ref{eq:functionalT1}-\ref{eq:functionalT2}) . 
\end{enumerate}
\end{theorem}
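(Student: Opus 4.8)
The plan is to dispatch the five items in order, as (4.) and (5.) lean on the structure set up in (1.)--(3.). Item (1.) is immediate: if $P_1,P_2\in\mathbb{P}^{(T\times T)}$ and row $i$ of $P_1$ has its single unit entry in column $\sigma(i)$, then $(P_1P_2)_{i,j}=(P_2)_{\sigma(i),j}\in\{0,1\}$, and row/column sums are preserved because left/right multiplication by a permutation matrix merely reindexes rows/columns. For (2a.) I would substitute $(QX)_{:,d}=QX_{:,d}$ to get $\|\Delta P(QX)_{:,d}\|_2^2=\|\Delta(PQ)X_{:,d}\|_2^2$, note that the entropy term is $P$-independent, and observe that $P\mapsto PQ$ is a bijection of $\mathbb{P}^{(T\times T)}$ (by (1.) and $Q^{-1}=Q^\dagger$) carrying the problem for data $QX$ to the one for data $X$; hence the minimizer $P^\epsilon$ for $X$ corresponds to $P^\epsilon Q^\dagger$ for $QX$, with $W^\epsilon$ unchanged. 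For (2b.) I would use the periodic-boundary identity $\|\Delta PX_{:,d}\|_2^2=\sum_{k=1}^T\big((PX_{:,d})_{k+1}-(PX_{:,d})_k\big)^2$ (indices mod $T$): a periodic translation $Q$ only cyclically reindexes these summands and an order-inverting $Q$ negates each summand and reverses their order, so $Q^\dagger\Delta^\dagger\Delta Q=\Delta^\dagger\Delta$ in both cases, whence $F^\epsilon_W(W^\epsilon,QP^\epsilon)=F^\epsilon_W(W^\epsilon,P^\epsilon)$ and $QP^\epsilon$ is again optimal. For (3.) I would write the $d$-th summand as $g_d(P)=\|\Delta PX_{:,d}\|_2^2$, a composition of the affine map $P\mapsto\Delta PX_{:,d}$ with the convex $\|\cdot\|_2^2$, hence convex, so that $\sum_d W_d g_d$ with $W_d\ge0$ is convex on the Birkhoff polytope.

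For (4.), fix $X,P,\epsilon>0$ and set $c_d:=X_{:,d}^\dagger P^\dagger\Delta^\dagger\Delta PX_{:,d}\ge0$; the objective $\sum_d W_d c_d+\epsilon\sum_d W_d\log W_d$ is strictly convex in $W$ on $\mathbb{P}^{(D)}$ and its minimizer lies in the relative interior, so it is unique; I would form the Lagrangian for $\sum_d W_d=1$, set $c_d+\epsilon(\log W_d+1)+\lambda=0$, solve $W_d\propto\exp(-c_d/\epsilon)$, and normalize to recover (\ref{eq:sol}), the constraint $W\ge0$ being inactive automatically. The $\mathcal{O}(DT)$ cost follows since each $c_d=\|\Delta(PX_{:,d})\|_2^2$ costs $O(T)$ (permute, apply the bidiagonal $\Delta$, square), done $D$ times. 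The monotonicity half of (5.) is then short: the objective of (\ref{eq:pair}) equals the ``expected non-smoothness'' summand of $F^\epsilon_W$ at $(W,\pi_{i,j}P)$, and the entropy summand is $P$-independent; since $i=j$ is admissible and $\pi_{i,i}=I$, the minimum in (\ref{eq:pair}) is at most its value at $P$, so $F^\epsilon_W(W,P^{new})\le F^\epsilon_W(W,P)$. Combined with (4.) (the $W$-update is a global minimizer for fixed $P$), the alternating iteration is nonincreasing in $F^\epsilon_W$, which is bounded below by $-\epsilon\log D$, so the objective values converge.

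The remaining, and in my view the only delicate, point is the $\mathcal{O}(DT^2)$ cost in (5.): a naive re-evaluation of the quadratic form over all $\Theta(T^2)$ candidate pairs would cost $\mathcal{O}(DT^3)$. The key is locality. With $y=PX_{:,d}$, the swap $\pi_{i,j}$ changes only the at most four consecutive-difference terms adjacent to positions $i$ and $j$, so the increment $\|\Delta\pi_{i,j}y\|_2^2-\|\Delta y\|_2^2$ is an explicit $O(1)$ function of $y_{i-1},y_i,y_{i+1},y_{j-1},y_j,y_{j+1}$ (with the obvious reductions when $i,j$ are adjacent, coincide, or wrap around under the periodic boundary). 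The algorithm is then: precompute $y^{(d)}=PX_{:,d}$ and the base value $\sum_d W_d\|\Delta y^{(d)}\|_2^2$ in $\mathcal{O}(DT)$; for each pair $(i,j)$ accumulate $\delta_{i,j}=\sum_d W_d\big(\|\Delta\pi_{i,j}y^{(d)}\|_2^2-\|\Delta y^{(d)}\|_2^2\big)$ at cost $O(D)$, i.e.\ $\mathcal{O}(DT^2)$ overall; and select $\arg\min_{i,j}\delta_{i,j}$ in $\mathcal{O}(T^2)$. I expect the real work to be in writing the incremental update formula cleanly, with all adjacency and wrap-around cases, and verifying term by term that it indeed reproduces $\|\Delta\pi_{i,j}PX_{:,d}\|_2^2$; once that is in place the cost bound (and, as above, the monotonicity) follow directly.
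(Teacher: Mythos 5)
Your proposal is correct and follows essentially the same route as the paper for items (1.), (2.), (4.) and the monotonicity half of (5.): the same bijection $P\mapsto PQ^\dagger$ for (2a.), the same key identity $Q^\dagger\Delta^\dagger\Delta Q=\Delta^\dagger\Delta$ for (2b.) (you justify it via the cyclic sum of squared differences, the paper via the ring-graph Laplacian -- the same fact), and the same Lagrangian computation and $i=j$ trick. Two local differences are worth noting. For (3.) you invoke convexity of the composition of the affine map $P\mapsto\Delta P X_{:,d}$ with $\|\cdot\|_2^2$, whereas the paper verifies midpoint convexity by hand from the positive semidefiniteness of $\Delta^\dagger\Delta$ applied to $v=(P^{\langle 0\rangle}-P^{\langle 1\rangle})X_{:,d}$; your argument is the cleaner and more standard one, and it avoids the (unstated in the paper) step that midpoint convexity plus continuity yields convexity. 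For the $\mathcal{O}(DT^2)$ bound in (5.) you correctly identify that naive re-evaluation over all $\Theta(T^2)$ pairs would cost $\mathcal{O}(DT^3)$ and supply the missing locality argument (each swap perturbs only $O(1)$ consecutive-difference terms, so each pair costs $O(D)$ after an $\mathcal{O}(DT)$ precomputation); the paper asserts the bound with only a one-sentence justification ("computes the non-smoothness between all points in all individual dimensions"), so on this point your treatment is more rigorous than the published proof rather than divergent from it.
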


\begin{proof}
\begin{enumerate}[(1.)]
\item See, e.g., Proposition 1.5.10 (c) in \cite{artin2013algebra}.

\item Please notice that a matrix of periodic translation or flipping of the data (so-called exchange matrix) $Q$ is a special type of permutation matrix, therefore $Q^{-1} = Q^T$. Because of the statement 1 of this theorem, the matrices $P^\epsilon Q^\dagger$ and $Q P^\epsilon$ are permutation matrices and they satisfy the feasibility conditions of problem \eqref{eq:functionalT1}. Since both statements of the theorem consider unchanged $W^{\epsilon}$, we will omit constant term of Shannon entropy regularization in the following proof.

\begin{enumerate}
\item 
We assume that $P^{\epsilon}$ solves the problem (\ref{eq:functionalT1}-\ref{eq:functionalT2}) with original data $X$, i.e.,
$\forall P \in \mathbb{P}^{(T \times T)}:$
\begin{displaymath}
\sum\limits_{d=1}^D W_d X^{\dagger}_{:,d} (P^{\epsilon})^{\dagger} \Delta^{\dagger} \Delta P^{\epsilon} X_{:,d} \leq 
\sum\limits_{d=1}^D
W_d X^{\dagger}_{:,d} P^{\dagger} \Delta^{\dagger} \Delta P X_{:,d}.
\end{displaymath}
Afterwards, we utilize the property of permutation matrix $Q^\dagger Q = I$ to modify the previous inequality to the equivalent form
\begin{displaymath}
\sum\limits_{d=1}^D W_d X^{\dagger}_{:,d} Q^\dagger Q (P^{\epsilon})^{\dagger} \Delta^{\dagger} \Delta P^{\epsilon} Q^\dagger Q X_{:,d} \leq 
\sum\limits_{d=1}^D
W_d X^{\dagger}_{:,d} Q^\dagger Q P^{\dagger} \Delta^{\dagger} \Delta P Q^\dagger Q X_{:,d}.
\end{displaymath}
We denote $\hat{P} = P Q^\dagger$. Notice that $Q$ is a non-singular matrix and therefore there exist equivalency between proving the statement for all $P \in \mathbb{P}^{(T \times T)}$ and for all $\hat{P} \in \mathbb{P}^{(T \times T)}$. The previous inequality written in form $\forall \hat{P} \in \mathbb{P}^{(T \times T)}:$
\begin{displaymath}
\sum\limits_{d=1}^D W_d (Q X)^{\dagger}_{:,d} (P^{\epsilon} Q^{\dagger})^{\dagger} \Delta^{\dagger} \Delta (P^{\epsilon} Q^{\dagger}) (Q X_{:,d}) \leq
\sum\limits_{d=1}^D
W_d (Q X)^{\dagger}_{:,d} \hat{P}^{\dagger} \Delta^{\dagger} \Delta \hat{P} (Q X)_{:,d}
\end{displaymath}
proves the optimality of $P^{\epsilon} Q^{\dagger}$ for the problem (\ref{eq:functionalT1}-\ref{eq:functionalT2}) with permuted data $QX$.
 
\item Matrix $\Delta^{\dagger} \Delta$ is a Laplace matrix of the ring graph and the cyclic permutation of the vertices results in the isomorphic graph with the same Laplace matrix. Therefore, we can write 
\begin{displaymath}
Q^{\dagger} \Delta^{\dagger} \Delta Q = \Delta^{\dagger} \Delta,
\end{displaymath}
for any periodic translation matrix or exchange matrix $Q$ and the finite-differencing operator $\Delta$, which satisfies a periodic boundary condition.

Then it easy to show that objective function value of \eqref{eq:functionalT1} for $QP^{\epsilon}$ is the same as for original solution $P^{\epsilon}$
\begin{displaymath}
\begin{array}{l}
\sum\limits_{d=1}^D W_d X^{\dagger}_{:,d} (Q P^{\epsilon})^{\dagger} \Delta^{\dagger} \Delta (Q P^{\epsilon}) X_{:,d} =
\sum\limits_{d=1}^D
W_d X^{\dagger}_{:,d} (P^{\epsilon})^{\dagger} \Delta^{\dagger} \Delta P^{\epsilon} X_{d,:},
\end{array}
\end{displaymath}
which is the minimal function value from all possible $P \in \mathbb{P}^{(T \times T)}$.
\end{enumerate}
We found feasible points with the minimal function values, therefore these points are minimizers of proposed constrained problems.

\item Let $P^{\langle 0 \rangle}, P^{\langle 1 \rangle} \in \mathbb{R}^{(T \times T)}$ be two double-stochastic matrices, i.e., 
\begin{equation}
\label{eq:doublestochasticprop}
\begin{array}{lrcl}
\forall k \in \lbrace 0,1 \rbrace: ~~~ & \forall i,j: P^{\langle k \rangle}_{i,j} & \in & [0,1], \\
& P^{\langle k \rangle} \mathbbm{1} & = & \mathbbm{1}, \\
& (P^{\langle k \rangle})^T \mathbbm{1} & = & \mathbbm{1}, 
\end{array}
\end{equation}
where $\mathbbm{1} \in \mathbb{R}^T$ is vector of ones.
Then for any $\alpha \in [0,1]$ we define
\begin{displaymath}
P^{\langle \alpha \rangle} = \alpha P^{\langle 0 \rangle} + (1 - \alpha) P^{\langle 1 \rangle}.
\end{displaymath}
Since interval $[0,1]$ is a convex set, we have $\forall i,j: P^{\langle \alpha \rangle}_{i,j} \in [0,1]$.
Additionally using \eqref{eq:doublestochasticprop}, we can write
\begin{displaymath}
\begin{array}{rcl}
P^{\langle \alpha \rangle} \mathbbm{1} & = & \alpha P^{\langle 0 \rangle}\mathbbm{1} + (1 - \alpha) P^{\langle 1 \rangle}\mathbbm{1} = \alpha \mathbbm{1} + (1 - \alpha) \mathbbm{1} = \mathbbm{1}, \\
(P^{\langle \alpha \rangle})^T \mathbbm{1} & = & \alpha (P^{\langle 0 \rangle})^T \mathbbm{1} + (1 - \alpha) (P^{\langle 1 \rangle})^T \mathbbm{1} = \alpha \mathbbm{1} + (1 - \alpha) \mathbbm{1} = \mathbbm{1}, 
\end{array}
\end{displaymath}
therefore matrix $P^{\langle \alpha \rangle}$ is also double-stochastic. The set of all double-stochastic matrices is convex.

To prove the convexity of functional \eqref{eq:functionalT1} in variable $P$, it is enough to show that
\begin{equation}
\label{eq:convexitytoprove}
F^\epsilon_{W}\left[S^\epsilon\right] \left( W,\frac{1}{2}(P^{\langle 0 \rangle} + P^{\langle 1 \rangle}) \right) 
\leq 
\frac{1}{2}
\left(
F^\epsilon_{W}\left[S^\epsilon\right] ( W,P^{\langle 0 \rangle}) 
+
F^\epsilon_{W}\left[S^\epsilon\right] ( W,P^{\langle 1 \rangle}) 
\right).
\end{equation}
Since $W$ is the same for both sides of the inequality and the Shannon entropy regularization term is the same as well, we can reduce \eqref{eq:convexitytoprove} to the proof of inequalities ($\forall d = 1,\dots,D$)
\begin{equation}
\label{eq:convexitytoprove2}
\begin{array}{l}
X_{:,d}^\dagger \left( \frac{1}{2}(P^{\langle 0 \rangle} + P^{\langle 1 \rangle}) \right)^\dagger\Delta^\dagger\Delta \left( \frac{1}{2}(P^{\langle 0 \rangle} + P^{\langle 1 \rangle}) \right) X_{:,d} \leq \\[2mm]
~~~~~~~~ \leq 
\frac{1}{2}
\left(
X_{:,d}^\dagger (P^{\langle 0 \rangle})^\dagger\Delta^\dagger\Delta P^{\langle 0 \rangle} X_{:,d}
+
X_{:,d}^\dagger (P^{\langle 1 \rangle})^\dagger\Delta^\dagger\Delta P^{\langle 1 \rangle} X_{:,d}
\right).
\end{array}
\end{equation}

We start with the classification of the matrix $\Delta^\dagger\Delta \in \mathbb{R}^{T \times T}$; since for any $v \in \mathbb{R}^{T}$ we have
\begin{equation}
\label{eq:DTDsps}
  v^\dagger  \Delta^\dagger\Delta v = \Vert \Delta v \Vert_2^2 \geq 0, 
\end{equation}
the matrix $\Delta^\dagger\Delta$ is symmetric positive semidefinite. In \eqref{eq:DTDsps}, we choose $v := (P^{\langle 0 \rangle} - P^{\langle 1 \rangle}) X_{:,d}$ to obtain
\begin{displaymath}
  X_{:,d}^\dagger (P^{\langle 0 \rangle} - P^{\langle 1 \rangle})^\dagger  \Delta^\dagger\Delta (P^{\langle 0 \rangle} - P^{\langle 1 \rangle}) X_{:,d} \geq 0.
\end{displaymath}
After simple manipulations, we can rewrite this inequality to the form of
\begin{displaymath}
\begin{array}{l}
  X_{:,d}^\dagger (P^{\langle 0 \rangle})^\dagger \Delta^\dagger\Delta P^{\langle 0 \rangle} X_{:,d} 
  +
  X_{:,d}^\dagger (P^{\langle 1 \rangle})^\dagger \Delta^\dagger\Delta P^{\langle 1 \rangle} X_{:,d}
  \geq \\[2mm]
~~~~~~~~ \geq
   X_{:,d}^\dagger (P^{\langle 0 \rangle})^\dagger \Delta^\dagger\Delta P^{\langle 1 \rangle} X_{:,d} 
  +
  X_{:,d}^\dagger (P^{\langle 0 \rangle})^\dagger \Delta^\dagger\Delta P^{\langle 1 \rangle} X_{:,d}.
\end{array}
\end{displaymath}
If we add a term equal the left-hand side of inequality to both sides, we obtain
\begin{displaymath}
\begin{array}{l}
  2\left(
  X_{:,d}^\dagger (P^{\langle 0 \rangle})^\dagger \Delta^\dagger\Delta P^{\langle 0 \rangle} X_{:,d} 
  +
  X_{:,d}^\dagger (P^{\langle 1 \rangle})^\dagger \Delta^\dagger\Delta P^{\langle 1 \rangle} X_{:,d}
  \right)
  \geq \\[2mm]
~~~~~~~~ \geq
   X_{:,d}^\dagger \left( P^{\langle 0 \rangle} + P^{\langle 1 \rangle} \right)^\dagger \Delta^\dagger\Delta \left( P^{\langle 0 \rangle} + P^{\langle 1 \rangle} \right) X_{:,d}, 
\end{array}
\end{displaymath}
which is after dividing by $4$ the inequality \eqref{eq:convexitytoprove2}.

\item With fixed $X,P$ and $\epsilon >0$, the optimization problem (\ref{eq:functionalT1}-\ref{eq:functionalT2}) in variable $W$ can be written in form
\begin{displaymath}
W^{*} = \arg \min\limits_{W \in \mathbb{P}^{(D)}} W^T b + \sum\limits_{d=1}^{D} W_d \log W_d
\end{displaymath}
with constant vector $b \in \mathbb{R}^D$. In the derivation, we ignore the inequality constraints, however, the final solution will satisfy these conditions naturally. The Lagrange function corresponding to this problem is given by
\begin{displaymath}
\mathcal{L}(W,\mu) := W^T b + \sum\limits_{d=1}^{D} W_d \log W_d + \mu (W \mathbbm{1} - 1)
\end{displaymath}
with Lagrange multiplier $\mu \in \mathbb{R}$ corresponding to equality constraint.
The Karush-Kuhn-Tucker system of optimality conditions can be derived as
\begin{displaymath}
 \begin{array}{rcrcl}
 \forall d = 1,\dots,D: ~~~ \dfrac{\partial \mathcal{L}}{\partial W_d} & = & b_d + \log W_d + 1 + \mu & = & 0 \\[4mm]
 \dfrac{\partial \mathcal{L}}{\partial \mu} & = & W \mathbbm{1} - 1 & = & 0
 \end{array}
\end{displaymath}
From the first equation we get
\begin{equation}
\label{eq:WproblemKKT1}
W_d = \exp(-\mu -1) \exp(-b_d),
\end{equation}
which after the substitution into the second equation gives
\begin{displaymath}
\exp(-\mu - 1) \sum\limits_{d=1}^D \exp (-b_d) = 1
~~~ \Rightarrow
~~~
\exp(-\mu - 1) = \frac{1}{\sum\limits_{d=1}^D \exp (-b_d)}.
\end{displaymath}
The substitution of this result into \eqref{eq:WproblemKKT1} leads to \eqref{eq:sol}, which satisfies inequality constraints.

\item Let $X,W,P$ be fixed and let $\left\{i^*,j^*\right\}$ be a solution of \eqref{eq:pair}.
Then from the definition of $P^{new}=\pi_{i^*,j^*}P$ and the property of the minimizer of \eqref{eq:pair}, we have
\begin{displaymath}
F^\epsilon_{W}\left[S^\epsilon\right](W,P^{new}) =
F^\epsilon_{W}\left[S^\epsilon\right](W,\pi_{i^*,j^*}P)
\leq
F^\epsilon_{W}\left[S^\epsilon\right](W,\pi_{i,j}P)
\end{displaymath}
for any $i,j \in \lbrace 1,\dots,T \rbrace$. We choose $i=j$ to obtain $\pi_{i,j} = I$ (identity matrix) and consequently the right-hand side of previous inequality can be written as
\begin{displaymath}
F^\epsilon_{W}\left[S^\epsilon\right](W,\pi_{i,j}P) = F^\epsilon_{W}\left[S^\epsilon\right](W,P).
\end{displaymath}
This proves that the iterative procedure based on the construction of locally-optimal sequence of pairwise permutation matrices cannot increase of the function value of the objective function \eqref{eq:functionalT1}.

Let us discuss the computational complexity of solving \eqref{eq:pair}. 
The algorithm computes the non-smoothness between all points in all individual dimensions. Such a computation is $\mathcal{O}(DT^2)$. During this computation, the algorithm identifies the optimal permutation, i.e., the solution of the problem \eqref{eq:pair}. Values of non-smoothness with new permutations are reused in the following $W$-step \eqref{eq:sol}, however, after a new value of $W$ is identified, the local non-smoothness values have to be recomputed.

\end{enumerate}
\end{proof}

\section{Numerical approach}

The monotonic iterative minimization of the original problem (\ref{eq:functionalT1}-\ref{eq:functionalT2}) can be achieved with the following Algorithm 1, having numerical iteration cost scaling  $\mathcal{O}(DT^2)$ in the leading order (see the above proof). For a given data matrix $X$, the algorithm can be performed for various values of $\epsilon$ and with different initializations for $W$ and $P$. The optimal value of $\epsilon$ and the best possible solution to  (\ref{eq:functionalT1}-\ref{eq:functionalT2}) can be identified using, for example, the popular L-curve method that detects the optimal solution of regularization problems \cite{calvetti00} - in the considered case as an elbow-point of the curve featuring pairs of values for the expected non-smoothness $\mathbb{E}_{W^\epsilon}\left[S^\epsilon\right]$ vs. the Shannon entropy $\sum_{d=1}^D W^\epsilon_d\log W^\epsilon_d$ for different values of $\epsilon$  (see Fig 1D for an example). 

 \begin{algorithm}
 \caption{for the solution of optimization problem (\ref{eq:functionalT1}-\ref{eq:functionalT2})}\label{alg:eSPAgeneric}
 \begin{algorithmic}[1]
   \Require Data matrix $X \in \mathbb{R}^{T,D}$, hyperparameter $\epsilon_W > 0$
   \Ensure Optimal values of $W$ and $P$ minimizing the functional $F^\epsilon_{W}$ in \eqref{eq:functionalT1} and satisfying constraints \eqref{eq:functionalT2}
   \State randomly choose initial $W^{(1)}$ and $P^{(1)}$\;
   \State $I=1;\, \left\{F^\epsilon_{W}\right\}^{(I)}=\infty;\, \delta\left\{F^\epsilon_{W}\right\}^{(I)}=\infty$\;
 \While{$\delta\left\{F^\epsilon_{W}\right\}^{(I)}>tol$}
 \State {\underline{$P$-step}: find $\pi_{i^*,j^*}$ as a solution of  \eqref{eq:pair} for fixed  $W^{(I)}$ and $P^{(I)}$
  \State $P^{(I+1)}=\pi_{i^*,j^*}P^{(I)}$\;
   \vspace{0.2cm} \;
  \State \underline{$W$-step}: find $W^{(I+1)}$ evaluating the explicit solution (\ref{eq:sol})  for a fixed  $P^{(I)}$\;
  \vspace{0.2cm} \;
  \State $\left\{F^\epsilon_{W}\right\}^{(I+1)}=F^\epsilon_{W}\left[S^\epsilon\right](W^{(I+1)},P^{(I+1)})$\;
  \State $I = I + 1$\;
  \State $\delta\left\{F^\epsilon_{W}\right\}^{(I)}= \left\{F^\epsilon_{W}\right\}^{(I-1)}-\left\{F^\epsilon_{W}\right\}^{(I)}$\; 
 }
 \EndWhile

\end{algorithmic}
\end{algorithm}

Extension of the common monotonic sorting to multiple dimensions can also be achieved using the formulation (\ref{eq:functionalT1}), by rewriting it  in the following simplified form that is linear in $P$ and log-linear in $W$: 
\begin{eqnarray}\label{eq:functionalM1}
	 \left\{W^\epsilon,P^\epsilon\right\}&=&\arg\min\limits_{W\in\mathbb{P}^{D},P\in\mathbb{P}^{T\times T}} M^\epsilon_{W}\left[S^\epsilon\right] (W,P), \nonumber\\
	  \textrm{where} ~M^\epsilon_{W}\left[S^\epsilon\right](W,P)&=&\underbrace{- \bar{T}^\dagger P\sum_{d=1}^{D} W_d X_{:,d} }_{\textrm{expected monotonicity}} +\underbrace{ \epsilon\sum_{d=1}^D W_d\log W_d}_{\textrm{Shannon entropy regularization}},
\end{eqnarray}
where $\bar{T}\in\mathbb{R}^{(T\times 1)} $ is a monotonic (increasing or decreasing) sequence of numbers, e.g.,  $\bar{T}=(1,2,\dots,T)$. It is straightforward to validate that for a fixed distribution of dimension weights $W_1,\dots,W_D$, $A\equiv\bar{T}$,  $B\equiv\sum_{d=1}^{D} W_d X_{:,d}$ and $\epsilon_{W}\equiv0$, above problem (\ref{eq:functionalM1}) is equivalent to a problem of matching of a given vectorized graph Laplacian  $B$ to a vectorized reference graph Laplacian $A$ in graph-matching and -allignment problems, see, e.g., eq (5)  in \cite{mena17}:
\begin{equation}
\label{eq:functionalM1}
	 \arg\min\limits_{P\in\mathbb{P}^{T\times T}} \|A-PB\|_2^2=\arg\min\limits_{P\in\mathbb{P}^{T\times T}} \left(-\text{trace} \left[A^\dagger PB\right]\right),
\end{equation}
where we use the fact that the 2-norm is invariant with respect to the 'crisp' permutations of the vector components. 
Furthermore, it should be noted that the aforementioned equality property \eqref{eq:functionalM1} is not preserved when employing the concept of 'soft' permutations $P$.

Then, if $\bar{T}$ is a monotonic sequence of integers (ascending or descending), deploying 
Theorem \ref{theorem:theone} and the Master theorem  \cite{master80,cormen01}, it is very straightforward to demonstrate that problem (\ref{eq:functionalM1}) subject to constraints (\ref{eq:functionalT2}) for a fixed $W$ is equivalent to a standard one-dimensional sorting problem, and can be efficiently solved using  the divide-and-conquer strategy with the modification of Algorithm 1. Thereby,  the $P$-step (monotonic sorting of $B$) can be performed with QuickSort-algorithm, resulting in the overall iteration complexity scaling of $\mathcal{O}(T\log(T)+DT)$  \cite{cormen01}.    

\section{Results}

\subsection{Synthetic dataset}

To illustrate and to test the suggested method for solving  (\ref{eq:functionalT1} - \ref{eq:functionalT2}), we first apply it to the analysis of synthetic data featuring two first dimensions with two periodic sinus and cosinus signals of different period and phase, and with all other feature dimensions (3 to 52) being the random realizations of the uniformly distributed random variable of the same variance as the first two dimensions  (Fig. 1A). Next, we randomly permute the rows of this data matrix along the data index dimension $t=1,\dots,T$, resulting in a dataset in Fig. 1B. Recovering the original smooth ordering from this scrambled data matrix $X$ in Fig.1B would require solving two problems simultaneously: (i) identifying that only the first two data dimensions out of 52 contain the permutation of two smooth features (requiring to check ${52 \choose 2}$ dimension combinations); and (ii) finding correct data permutations  - out of possible $50!$ permutations - in these two particular dimensions. Applying full combinatorial search to detect the smooth original patterns from Fig. 1A would require checking ${52 \choose 2}\cdot 50!=4\cdot 10^{67}$ possible data rearrangements - which is factor two larger than the number of atoms in our galaxy. Using the currently most powerful top-1 supercomputer (``Frontier'' at Oak Ridge National Lab, $8.7$ Mio cores and 21'100kW of energy consumption) to check all these possibilities - with $10^{-4}$ sec. for checking one possibility on one core - would require $1.93$ times more energy than was produced by all of the stars in the visible part of the universe since the Big Bang. But also deploying common ML/AI algorithms for feature selection (e.g., based on linear or non-linear covariance and correlation) would not help: as can be seen from Fig. 1B, the correlation matrix is diagonal and no statistically-significant cross-correlations can be detected in these data. In contrast, the numerical solution of the optimization problem  (\ref{eq:functionalT1}-\ref{eq:functionalT2}) with Algorithm 1 proposed above allows finding the correct dimensions and the permutation recovering the original smooth patterns (see the red panel in Fig.1D) in several seconds on the commodity laptop. Repeating Algorithm 1 for different values of regularization parameter $\epsilon$, one finds the best solution as an elbow-point of the respective L-curve, featuring pairs of values for expected non-smoothness $\mathbb{E}_{W^\epsilon}\left[S^\epsilon\right]$ vs. the Shannon entropy $\sum_{d=1}^D W^\epsilon_d\log W^\epsilon_d$ (see Fig. 1D).   As can be seen from Figs. 1E and 1F, the overall computational cost of the proposed algorithm scales linearly in data dimension and allows robust detection of smooth pattern permutations, also in situations when the data statistics size $T$ is smaller than the feature data dimension $D$.      

\subsection{Taiwanese companies bankruptcy data analysis}

In order to demonstrate the versatility of the proposed methodology in identifying pattern permutations possessing specific characteristics, we examine an economic data scenario focused on identifying a trajectory that mitigates the risk of bankruptcy for a company. This trajectory aims to exhibit a smooth decrease in bankruptcy risk while requiring minimal alterations to the company's characteristics at each step. Notably, this analysis will exclusively utilize data instances representing other companies.
The minimal amount of changes in every step can be quantified and minimized with the expected non-smoothness measure $\mathbb{E}_{W^\epsilon}\left[S^\epsilon\right]$ proposed in (\ref{eq:functionalT1}), whereas desired smoothness $W_0$ in the risk dimension can be imposed with an additional linear constraint  
\begin{eqnarray}\label{eq:risk_monotone}
W_r&=&W_0,
\end{eqnarray}
  where $r$ is the index of risk dimension in the data. 
  
Our study commences by considering a dataset encompassing information from 6819 Taiwanese companies. Besides of the $95$ company characteristics, it contains a label dimension with values $0$ or $1$, dependent on the fact whether the particular company went bankrupt within one year or not  \cite{taiwan16}. 
  To infer an additional bankruptcy risk dimension for the following analysis, we deploy various AI/ML methods (including gradient-boosted random forests, deep and shallow neuronal networks with various architectures, and entropic learning methods), comparing their Area Under Curve (AUC) bankruptcy prediction quality on the test data that was not used in training \cite{xgboost16,lecun15,Horenko_2020,espa_22,horenko_pnas_22,horenko_pnas_23}. Afterwards, we use the bankruptcy probabilities from the eSPA+ entropic learning method as an additional feature dimension - since eSPA+ was showing the highest AUC ($0.975\pm0.013$) with a most narrow confidence interval on test data (see Fig. 1G and 1H).  Solving  (\ref{eq:functionalT1}-\ref{eq:functionalT2}) subject to additional constraints (\ref{eq:risk_monotone})  - and with $W_0=0.1$ for one of the companies that actually went bankrupt (and with eSPA+ bankruptcy risk of 0.58) - results in the smooth and risk-monotonic trajectory reducing the risk to zero in $7$ steps, where these $7$ steps are represented by $7$ other companies that are available in the same data (see Fig. 1I). Increasing $W_0$ results in the increasing number of risk-reducing steps - but also in the decreasing smoothness of permutations in the original $95$ company dimensions. Decreasing $W_0$ results in the quick growth of non-smoothness in the risk dimension - so empirically $W_0=0.1$ represents an optimal elbow point that achieves the best smoothness both in risk and in the other company features.             


\begin{figure}[h!]
        \includegraphics[clip,  width=1.1\textwidth]{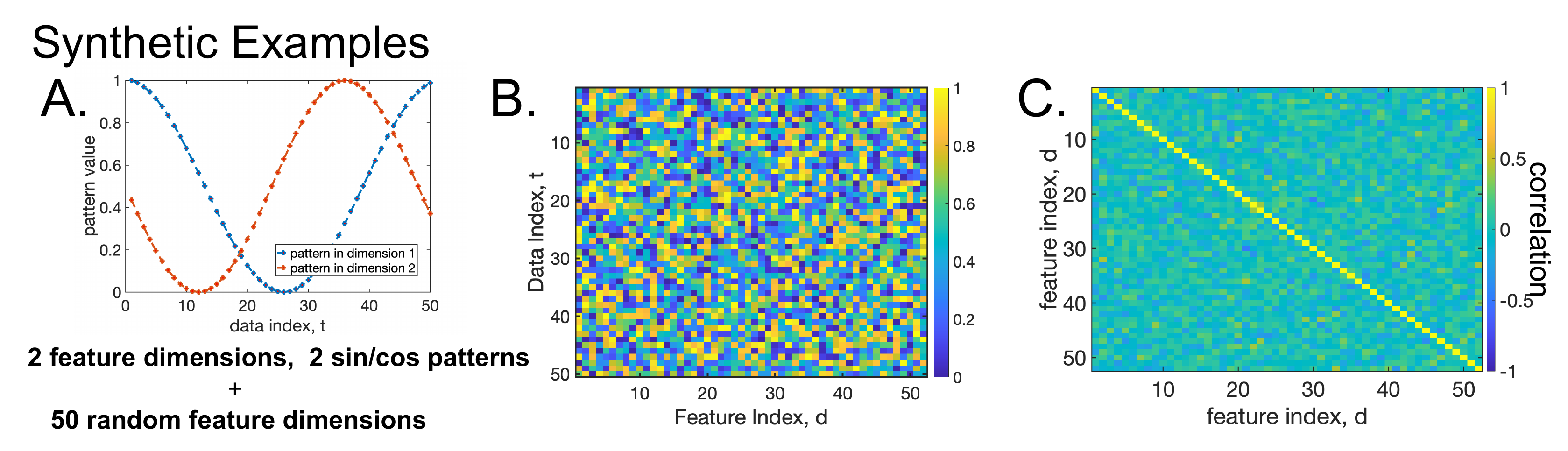}
        \includegraphics[clip,  width=1.1\textwidth]{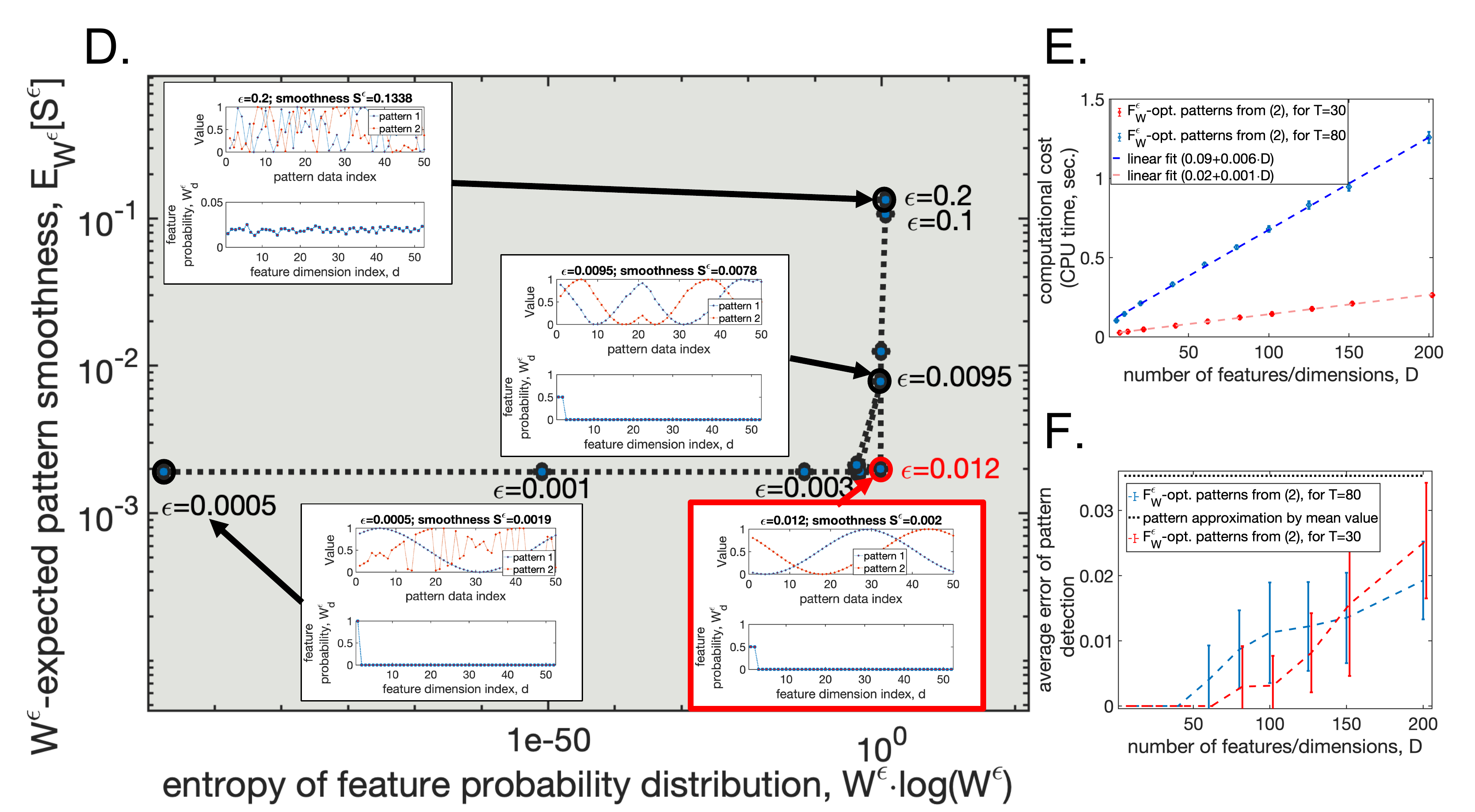}
        \includegraphics[clip,  width=1.1\textwidth]{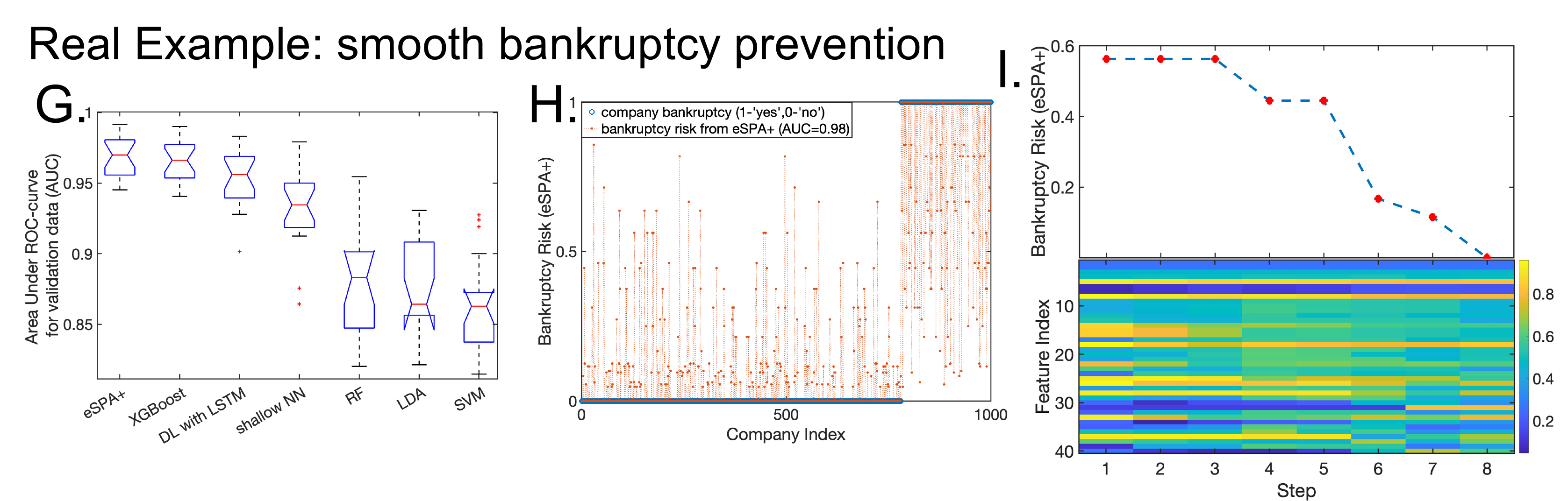}
 \caption{Recovering smooth data ordering for the synthetic data examples (A-F) and for the multidimensional data set on bankruptcy of Taiwanese companies (G-I) \cite{taiwan16}.}
\end{figure}





\bmhead{Availability of data and material} The economical data used in Fig. 1G-1I is available at \url{https://www.kaggle.com/datasets/fedesoriano/company-bankruptcy-prediction}. 

\bmhead{Code availability} No.

\section*{Declarations}

\bmhead{Confict of interest} The authors have no confict of interest to declare that are relevant to the content of this article.

\bmhead{Ethics approval} Not applicable.

\bmhead{Consent to participate} Not applicable.

\bmhead{Consent for publication} Not applicable

\bibliography{SER}

\end{document}